\journal{Journal of Discrete Applied Mathematics}
\newtheorem{theorem}{Theorem}
\newcommand\calH{\mathcal{H}}
\newcommand\hH{\hat{\calH}}
\newcommand\VV{\mathfrak{V}}
\newcommand\hVV{\hat{\mathfrak{V}}}
\DeclareMathOperator{\precision}{prec}
\DeclareMathOperator{\recall}{recall}
\DeclareMathOperator{\Mod}{Mod}
\renewcommand{\epsilon}{\varepsilon}
\newcommand{\algname}[1]{\textsc{#1}}
\def\attrs{\Phi}
\begin{document}

\begin{frontmatter}

\title{Probably approximately correct learning\\ of Horn envelopes from queries}


\author[]{Daniel Borchmann}
\ead{daniel@algebra20.de}

\author[tomaddress]{Tom Hanika\corref{mycorrespondingauthor}}
\cortext[mycorrespondingauthor]{Corresponding author}
\ead{tom.hanika@cs.uni-kassel.de}

\author[sergeiaddress]{Sergei Obiedkov}
\ead{sergei.obj@gmail.com}

\address[tomaddress]{Knowledge \& Data Engineering Group,
 University of Kassel, Germany}
\address[sergeiaddress]{National Research University,
Higher School of Economics, Moscow, Russia}

\begin{abstract}
  We propose an algorithm for learning the Horn envelope of an arbitrary domain
  using an expert, or an oracle, capable of answering certain types of queries
  about this domain. Attribute exploration from formal concept analysis is a
  procedure that solves this problem, but the number of queries it may ask is
  exponential in the size of the resulting Horn formula in the worst case. We
  recall a well-known polynomial-time algorithm for learning Horn formulas with
  membership and equivalence queries and modify it to obtain a polynomial-time
  probably approximately correct algorithm for learning the Horn envelope of an
  arbitrary domain.
\end{abstract}

\begin{keyword}
   PAC learning\sep attribute exploration\sep FCA\sep formal concept
   \MSC[2010] 68T27\sep 06B99
\end{keyword}

\end{frontmatter}


\section{Introduction}

The learnability of concepts from oracle queries has received significant attention in learning theory.  The most common types of oracles investigated
in the literature are membership and equivalence oracles, and for these types of
oracles various results have been obtained showing learnability in
polynomial time.  One of the most prominent examples is the
fact that Horn formulas can be learnt in polynomial time with access to
membership and equivalence oracles~\citep{angluin1992learning}.

In the realm of formal concept analysis \citep{ganter1999formal}, a different learning method has been
established almost simultaneously with the standard query learning setting.  The
theory of formal concept analysis emerged as a subfield
of mathematical order theory, more precisely of lattice theory, and it
studies lattices as \emph{hierarchies of concepts}.  Since its emergence in the
early 1980s, it has evolved into a rich theory with a wide range of
applications.
An important technique of formal concept analysis is the \emph{attribute exploration}
algorithm.  This algorithm aims at
learning a Horn representation, also called a \emph{Horn envelope}, of the
knowledge of a \emph{domain expert}.  A Horn envelope of a theory is a Horn
formula whose set of models includes all the models of the theory and is as
specific as possible \citep{kavvadias1993horn}. Here, a domain expert is an
oracle that is able to answer questions of the form \enquote{Does $A$ imply $B$
  in your domain?}, where $A$ and $B$ are conjunctions of atomic propositions.  If $A \to B$ is
indeed true, the expert confirms this implication.  Otherwise, the expert gives
a \emph{counterexample}, i.e., a model $C$ of the domain containing $A$ but not
$B$.

A large number of variants of the classical attribute exploration algorithm have
been investigated, and a wide range of applications have been proposed and
examined \citep{ganter2016conceptual}.  In particular, it turned out that the notion of a domain expert is
well suited for practical applications.  However, in the worst case, attribute
exploration requires exponential time in the number of propositional variables
and the size of the resulting Horn formula. This is because it enumerates all
the models of the domain as a byproduct, and their number may be exponential in
the size of the Horn formula.  On the other hand, an
\emph{exact} computation of the Horn envelope of real-world domains is rarely
useful in practice, as special cases may lead to artificial Horn formulas.

The problem of exponentially many queries does not exist in the case of using
membership and equivalence queries~\citep{angluin1992learning}, but in this
algorithm the queries are asked with respect to the Horn envelope rather than
with respect to the actual domain we want to explore.  Therefore, in our setting,
this algorithm is applicable only to Horn domains (for which the Horn envelope
is the same as the domain theory).  But even in this case, equivalence queries
may be hard to answer because they require an oracle to provide a negative
counterexample, a description of something that does not exist in the domain.

In this work we want to bring together the best of both approaches: we want to
devise a learning algorithm that requires only polynomial time in the size of
the output and issues only polynomially many queries to a domain expert.  To
this end, we propose a \emph{probably approximately correct} (PAC) version of
attribute exploration that computes an approximation of the Horn envelope of the
domain theory using queries about the validity of Horn formulas, just as in
classical attribute exploration.  We investigate two notions of approximation of
Horn envelopes: one is based on the agreement of a large fraction of models,
akin to the one used by~\cite{kautz1995horn}.  A second, novel, and stronger
notion called \emph{$\varepsilon$-strong Horn approximation} is based on the
requirement of the involved closure operators to coincide on a large
fraction of subsets.  The latter makes it possible to avoid some very weak
approximations, as we shall discuss later.

We state the problem precisely in
Section~\ref{sec:preliminaries}.  We then recall the algorithm
from~\citep{angluin1992learning} in Section~\ref{sec:afp}.  It serves the basis for our
PAC algorithms presented in Section~\ref{sec:pac}. The basic version does
not need counterexamples: it only needs the oracle to confirm or reject proposed
Horn clauses. Taking counterexamples into account makes it possible to reduce
the number of queries.  We show the effectiveness of our approach by means of
example with real-world data in Section~\ref{sec:evaluation}.

\section{Preliminaries}
\label{sec:preliminaries}

A \emph{Horn clause} over a set of propositional variables $\attrs$ is a disjunction of variables from $\attrs$ and their negations (i.e., \emph{literals}) containing at most one unnegated variable (\emph{positive literal}). The negated variables form the \emph{body} of the Horn clause, whereas the unnegated variable is called the \emph{head} of the clause. A \emph{definite Horn clause} contains exactly one positive literal. A \emph{Horn sentence} or \emph{Horn formula} is a conjunction of Horn clauses.
A Horn sentence consisting of definite Horn clauses with the same body can equivalently be represented by an \emph{implication} $p_1 \land \dots \land p_n \to q_1 \land \dots \land q_m$, where $p_i, q_i \in \Phi$. If one of the clauses sharing the body is not definite, i.e., if it contains no positive literal, the corresponding sentence can be represented by an implication $p_1 \land \dots \land p_n \to \bot$, where $\bot \not\in \attrs$ is the propositional constant falsum.

We will predominantly use set notation for representing Horn clauses and sentences. In particular, we will use notation $A \to B$, where $A, B \subseteq \attrs$, to represent the implication
\[\bigwedge_{p \in B}((\bigwedge_{q \in A}q) \to p).\]
Here, $A$ will be referred to as the \emph{premise} and $B$ as the \emph{conclusion} of the implication $A \to B$.  Abusing notation, we identify $\bot$ with the set $\attrs \cup \{\bot\}$, which implies, e.g., that $A \subseteq \bot$ and, consequently, $A \cap \bot = A$ for any $A \subseteq \attrs$. A Horn sentence $\calH$ will be regarded as a set of implications, and $|\calH|$ will stand for the number of implications in $\calH$.

A \emph{variable assignment} $V$ is a function that maps every propositional variable in $\Phi$ to 1 (true) or 0 (false). Again, we will often identify a variable assignment with the set of variables that it maps to 1. An assignment $V$ is a \emph{model} of a Horn clause $h$ (notation $V \models h$) if $h$ evaluates to 1 under the assignment $V$ (with the standard semantics of logical connectives). $V$ is a model of a Horn sentence $\calH$ (notation: $V \models \calH$) if it is a model of every clause it contains. As a special case, it is easy to see that $V$ is a model of an implication $A \to B$ if $A \not\subseteq V$ or $B \subseteq V$. We denote by $\Mod\calH$ the set of all models of $\calH$.

Two Horn sentences are \emph{equivalent} if they have exactly the same sets of
models. A Horn sentence $\calH_1$ \emph{entails} a Horn sentence $\calH_2$ if
every model of $\calH_1$ is a model of $\calH_2$ (notation: $\calH_1 \models
\calH_2$). 
It is well-known that the set of models of a Horn sentence is closed under intersection. This makes it possible to  define $\calH(V)$ as the unique minimal model of $\calH$ in which 1 is assigned to all variables in $V$ and as $\attrs \cup \{\bot\}$ if no model containing $V$ exists. It is not difficult to see that $\calH(\cdot)$ is the \emph{closure operator} (i.e., it is monotone, extensive, and idempotent) corresponding to the closure system of models of $\calH$. Of course, $\calH(V) = V$ precisely for the models of $\calH$; we will sometimes refer to these models as sets \emph{closed} with respect to $\calH(\cdot)$. Obviously, if $\calH_1$ is equivalent to $\calH_2$, then $\calH_1(V) = \calH_2(V)$ for all $V \subseteq \attrs$.

Furthermore, a set of variable assignments is a set of models of a Horn sentence if and only if it is closed under intersection. We will denote the closure of a set $\VV$ of variable assignments under intersection by $\hVV$.  We call a Horn sentence $\calH$ a \emph{Horn envelope} for a set of assignments $\VV$ if $\hVV$ is precisely the set of models of $\calH$; note that, in this case, $\hVV = \{V \subseteq \attrs \mid V = \calH(V)\}$.

A set of variable assignments may have several equivalent envelopes. Of special interest, are envelopes that are minimal in the number of implications. One particular minimal envelope is known from formal concept analysis \citep{ganter1999formal} under the name of the Duquenne--Guigues or canonical basis of implications \citep{guigues1986famille}, which we define next.
A variable assignment $V$ is called \emph{pseudo-closed} with respect to a closure operator $\calH(\cdot)$ if
\begin{enumerate}
	\item $V \neq \calH(V)$;
	\item $\calH(W) \subsetneq V$ for every pseudo-closed $W \subsetneq V$.
\end{enumerate}
Note that, according to this definition, every variable assignment minimal among those that are not closed is pseudo-closed.

The \emph{Duquenne--Guigues basis} or \emph{canonical basis} of a Horn sentence $\calH$ is the following Horn sentence:
\begin{equation}\label{eq:dg}
	\bigwedge\{P \to \calH(P) \mid P\textrm{ is pseudo-closed with respect to } \calH(\cdot)\}.
\end{equation}
If $\calH$ is a Horn envelope of $\VV$, we also say that (\ref{eq:dg}) is the Duquenne--Guigues basis of $\VV$.

The problem of learning Horn envelopes frequently occurs in various settings, in particular, in data analysis, where Horn sentences are often used to summarize interdependencies between attributes in data. In this context, the data is given by a set $\VV$ of variable assignments and the task is to find its Horn envelope, i.e., a basis of implications valid in the data. However, the size of the Horn envelope $\hH$ of $\VV$ can be exponential in the size of $\VV$ \citep{kautz1995horn}. From the computational perspective, one could hope for an algorithm that runs in polynomial total time \citep{johnson1988generating}, that is, an algorithm polynomial in the size of input and output, i.e., in $|\attrs|$, $|\VV|$, and $|\hH|$, but no such algorithm is known yet. For this reason, it may be useful to compute Horn envelopes approximately.

Let $\hH$ be a Horn envelope of $\VV$, i.e., $\Mod\hH = \hVV$. We call a Horn sentence $\calH$ an \emph{$\varepsilon$-Horn approximation} of $\VV$ if
\begin{equation}\label{eq:approx}
	\frac{|\Mod\calH \bigtriangleup \Mod\hH|}{2^{|\attrs|}} \leq \varepsilon,
\end{equation}
where $A \bigtriangleup B$ is the symmetric difference between sets $A$ and $B$. This is the notion of approximation used in \citep{kautz1995horn}, where a probabilistic algorithm to compute such an approximation from a set of models in total polynomial time is presented.
However, this notion of approximation may be too weak for practical purposes: achieving an $\varepsilon$-Horn approximation of $\VV$ is very easy if $\hVV$ is small relative to $2^{|\attrs|}$, which is often the case. Since many real-world datasets are sparse, the size of $\hVV$ is often exponentially smaller than $2^{|\attrs|}$. Then setting $\calH = \{\varnothing \to \bot\}$ results in $\Mod\calH = \varnothing$, and the error \[\frac{|\Mod\calH \bigtriangleup \Mod\hH|}{2^{|\attrs|}} = \frac{|\Mod\hH|}{2^{|\attrs|}}\] is exponentially small.

Therefore, we will also use a stronger notion of approximation introduced in \citep{babin2012models}. We call $\calH$ an \emph{$\varepsilon$-strong Horn approximation} of $\VV$ if
\begin{equation}\label{eq:s-approx}
\frac{|\{V \subseteq \attrs \mid \calH(V) \neq \hat{\calH}(V)\}|}{2^{|\attrs|}} \leq \varepsilon,
\end{equation}
where $\hH$ is a Horn envelope of $\VV$. It is easy to see that an
$\varepsilon$-strong Horn approximation of $\VV$ is always an $\varepsilon$-Horn
approximation of $\VV$, but the reverse is not true.

\section{Learning Horn Sentences with Equivalence and Membership Queries}
\label{sec:afp}
In this paper, we consider the problem of learning Horn approximations via queries. In the query learning framework, rather than learning from a training dataset, the learning algorithm has access to an oracle (or an expert), which it can address with certain predefined types of questions \citep{angluin1988queries}. Probably, the most typical are equivalence and membership queries. In a \emph{membership query}, the learner asks whether a certain instance is an example of the concept being learned. For the problem of learning Horn sentences, the membership query allows the learning algorithm to find out whether a particular variable assignment is a model of the target Horn sentence. An \emph{equivalence query} is parameterized with a hypothesis describing the concept being learned. If the hypothesis matches the concept, the answer is positive and learning may be terminated. Otherwise, the oracle must provide a counterexample covered by the hypothesis, but not by the target concept (\emph{negative counterexample}), or vice versa (\emph{positive counterexample}). In our case, the target concept and hypotheses are Horn sentences and a counterexample is a variable assignment satisfying exactly one of these two sentences.

An algorithm for learning Horn sentences with equivalence and membership queries is described in \citep{angluin1992learning}, where it is proved that it requires time polynomial in the number of variables, $n$, and the number of clauses, $m$, of the target Horn sentence; $O(mn)$ equivalence queries and $O(m^2n)$ membership queries are made in the process. In the version of the algorithm we present here, the algorithm maintains a hypothesis $\calH$ consisting of implications of the form $A \to B$, where $A \subseteq B \subseteq \attrs \cup {\bot}$. The algorithm starts with the empty hypothesis, which is compatible with every possible assignment, and proceeds until a positive answer is obtained from the equivalence query. If a negative example $X$ is received instead, the algorithm uses membership queries  to find an implication $A \to B$ in the current hypothesis $\calH$ such that $A \cap X \neq A$ is not a model of the target Horn sentence. If such an implication is found, the implication $A \to B$ is replaced by $A \cap X \to B$, which ensures that $X$ is no longer a model of $\calH$. When a positive counterexample $X$ is obtained from an equivalence query, every implication $A \to B$ of which $X$ is not a model is replaced by $A \to B \cap X$ (recall that we identify $\bot$ with $\attrs \cup {\bot}$). We give pseudocode in Algorithm \ref{algo:afp} and refer the reader to \citep{angluin1992learning} for further details.

\begin{algorithm}
	\caption{\algname{Horn1}($equivalence(\cdot), member(\cdot)$)}
	\label{algo:afp}
	\begin{algorithmic}[1]
		\REQUIRE An equivalence and a membership oracles for a Horn sentence $\calH_*$.
		\ENSURE The Duquenne--Guigues basis of $\calH_*$ (represented as a set of implications).
		\STATE $\calH := \varnothing$
		\WHILE{$equivalent(\calH)$ returns a counterexample $X$}
			\IF[negative counterexample]{$X \models \calH$}
				\STATE $found :=$ \textbf{false}
				\FORALL{$A \to B \in \calH$}
					\STATE $C := A \cap X$
					\IF{$A \neq C$ \textbf{and not} $member(C)$}\label{line:afp-member}
						\STATE $\calH := \calH \setminus \{A \to B\}$
						\STATE $\calH := \calH \cup \{C \to B\}$
						\STATE $found :=$ \textbf{true}
						\STATE \textbf{exit for}
					\ENDIF
				\ENDFOR
				\IF{\textbf{not} $found$}
					\STATE $\calH := \calH \cup \{X \to \bot\}$
				\ENDIF
			\ELSE[positive counterexample]
				\FORALL{$A \to B \in \calH$ such that $X \not\models A \to B$}
					\STATE $\calH := \calH \setminus \{A \to B\}$
					\STATE $\calH := \calH \cup \{A \to B \cap X\}$\COMMENT{If $B = \bot$, assume that $B = \attrs \cup \{\bot\}$}
				\ENDFOR
			\ENDIF
		\ENDWHILE
	\end{algorithmic}
\end{algorithm}

In \cite{arias2011construction}, it is shown that Algorithm \ref{algo:afp} always produces the Duquenne--Guigues basis of the target Horn sentence no matter what examples are received from the equivalence queries.

However, this algorithm has limitations in terms of applications we have in mind. In what situations query-based learning can be useful? One scenario is when there is not enough data about the domain under consideration, but there are domain experts willing to share their knowledge about the domain. We can use queries to extract information from them. Another scenario is when there is a huge amount of data, more than can be handled by standard algorithms for mining dependencies, and this data is organized in a distributed database or is spread over the Internet; however, there are mechanisms for efficiently querying the data. Query-based learning may also be useful if we work with a mathematical domain, one with an infinite number of objects, and there are procedures that can automatically prove theorems about the domain or generate counterexamples from this domain to our hypotheses; such procedures can be used as oracles, and we only need to ask them the right questions.

Unfortunately, it is not easy to use Algorithm \ref{algo:afp} to learn valid implications in such situations. One problem is that the algorithm needs negative counterexamples. These counterexamples are not part of the domain, they are propositional combinations that never occur. It is unreasonable to expect from a human expert to be able to easily produce such  combinations. A computer program can search a database or the Internet for a positive counterexample to a hypothesis, but it is more difficult to find something that does not exist. It may not always be easy to construct a mathematical object that violates a certain conjecture, but it seems much more difficult to construct a description of a non-existing object that satisfies the conjecture.

There is a more fundamental problem with applying Algorithm \ref{algo:afp} in our setting: the oracles in Algorithm \ref{algo:afp} must answer queries relative to the Horn formula being learnt. In our case, we work with an arbitrary domain and want to compute its Horn envelope; we assume that the oracle answers queries relative to the domain and not to its Horn envelope. If our domain is not Horn, i.e., its set of models $\VV$ is not closed under intersection, then the set $\hVV$ of models of its Horn envelope is different from $\VV$. Therefore, we will not receive a positive answer to an equivalence query even if we compute the envelope precisely; instead, we will obtain a negative counterexample from $\hVV \setminus \VV$. A similar problem occurs with membership queries: to be able to use Algorithm \ref{algo:afp}, we need the oracle to answer membership queries relative to $\hVV$, rather than to $\VV$.

\section{Learning Horn Envelopes of Arbitrary Domains}
\label{sec:pac}

A solution is offered by formal concept analysis in the form of a procedure called \emph{attribute exploration} \citep{ganter1999formal}. Instead of membership and equivalence queries, it uses what we will call \emph{implication queries}, i.e., queries of the form \enquote{Is $\VV \models A \to B$ true?} for $A, B \subseteq \attrs$. The oracle, or domain expert, answers positively in case the entailment holds or provides a \emph{positive counterexample} $X \in \VV$ such that $X \not\models A \to B$. In terms of \citep{angluin1988queries}, implication queries are a special case of \emph{superset queries}: asking whether $\VV \models A \to B$ amounts to asking whether the set of models of $A \to B$ is a superset of $\VV$.

%
The algorithm only asks about the validity of implications that do not follow from those already confirmed by the expert and that do not contradict examples provided by the expert. Upon termination of the algorithm, the set of confirmed implications is the canonical basis of
$\VV$. Moreover, the set $\VV'$ of all models returned by the
 expert can be considerably smaller than $\VV$, but it has the same Horn envelope $\hH$. The downside is that the number of queries may be exponential in $\hH$, since $\VV'$ must contain all models of $\hH$ that cannot be represented as the intersection of other models of $\hH$; these are called \emph{characteristic} models of $\hH$ and their number can be exponential in $|\hH|$ \citep{kautz1995horn}.
Also, while deciding what queries must be posed, the algorithm implicitly enumerates all models in $\VV$.  In particular, the
time between two queries to the domain expert can be exponential in $|\attrs|$.

In the following, we present a modification of Algorithm \ref{algo:afp} that simulates membership queries relative to $\hVV$ by implication queries relative to $\VV$, the same queries as those used in attribute exploration. It also replaces equivalence queries by a call to a stochastic procedure, which makes it possible to compute an $\varepsilon$-Horn approximation of $\VV$ with the desired probability $\delta$. We will then modify this algorithm so that it produces an $\varepsilon$-strong Horn approximation of $\VV$. The resulting algorithms can be considered as PAC versions of attribute exploration.

\subsection{Simulating Membership Queries}

Let $\hH$ be a Horn envelope of a set $\VV \subseteq 2^\attrs$. For computing $\hH$, we need the membership query be answered relative to $\hVV$. Such a query can be simulated by several implication queries relative to $\VV$. One well-known (see, e.g., \cite{arias2017learning}) method to do this is presented in Theorem \ref{thm:member}.

\begin{theorem}
\label{thm:member}
	Let $\attrs$ be a set of variables, $A \subsetneq \attrs$, and $\VV \subseteq 2^\attrs$ be an arbitrary set of variable assignments. Then $A \in \hVV$ if and only if $\VV \models A \to \{a\}$ for no $a \in \attrs \setminus A$.
\end{theorem}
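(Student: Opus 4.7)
The plan is to argue both directions directly from the characterization of $\hVV$ as the closure of $\VV$ under (set) intersection. I expect no genuine obstacle; the subtlety lies only in handling the boundary condition $A \subsetneq \Phi$ correctly.

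For the forward direction, I would assume $A \in \hVV$ and derive a contradiction from the hypothesis that $\VV \models A \to \{a\}$ holds for some $a \in \Phi \setminus A$. Because $A \subsetneq \Phi$, the assignment $A$ actually arises as a nonempty intersection $A = \bigcap \mathcal{S}$ of elements of $\mathcal{S} \subseteq \VV$ (not as the vacuous intersection giving $\Phi$). Every $V \in \mathcal{S}$ contains $A$, so by the assumed implication every $V \in \mathcal{S}$ contains $a$; hence $a$ lies in $\bigcap \mathcal{S} = A$, contradicting $a \in \Phi \setminus A$.

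For the backward direction, I would use the hypothesis to construct $A$ explicitly as an intersection of elements of $\VV$. For each $a \in \Phi \setminus A$, the failure of $\VV \models A \to \{a\}$ yields a witness $V_a \in \VV$ with $A \subseteq V_a$ and $a \notin V_a$. Since $\Phi \setminus A$ is nonempty (again because $A \subsetneq \Phi$), the intersection $A' := \bigcap_{a \in \Phi \setminus A} V_a$ is a genuine, nonempty intersection of elements of $\VV$. The inclusion $A \subseteq A'$ is immediate from $A \subseteq V_a$ for each $a$, while for every $a \in \Phi \setminus A$ the condition $a \notin V_a$ gives $a \notin A'$, so $A' \subseteq A$. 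Hence $A = A' \in \hVV$.

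The only point requiring care is the use of $A \subsetneq \Phi$: it guarantees in the forward direction that membership of $A$ in $\hVV$ is realized by a nonempty family $\mathcal{S}$ rather than by the empty intersection, and it guarantees in the backward direction that the index set $\Phi \setminus A$ over which we intersect is nonempty, so that the constructed $A'$ is indeed a member of $\hVV$. Everything else is a direct manipulation of the definitions.
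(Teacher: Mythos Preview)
Your proposal is correct and follows essentially the same route as the paper: both directions are argued via the characterization of $\hVV$ as intersections of elements of $\VV$, with the backward direction building $A$ as an intersection of witnesses $V_a$ (the paper intersects all $B\in\VV$ containing $A$, you intersect the specific $V_a$'s, which is the same idea). Your explicit discussion of why $A\subsetneq\Phi$ is needed is a bit more careful than the paper's version, but the underlying argument is identical.
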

\begin{proof}
	If $\VV \models A \to \{a\}$ for some $a \in \attrs \setminus A$, then every assignment from $\VV$ that includes $A$ as a subset must contain $a$. But then, since $a \not \in A$, the set $A$ is not in $\VV$ and it cannot be an intersection of  assignments from $\VV$; i.e., $A \not\in \hVV$.

	Conversely, if $\VV \models A \to \{a\}$ for no $a \in \attrs \setminus A$, then, for every $a \in \attrs \setminus A$, there is $B \in \VV$ such that $A \subseteq B$, but $a \not \in B$. Hence, $A$ is the intersection of all $B \in \VV$ such that $A \subseteq B$; i.e., $A \in \hVV$.
\end{proof}

Theorem \ref{thm:member} makes it possible to check membership in $\hVV$ using at most $|\attrs|$ implication queries for every proper subset of $\attrs$. To check if $A \in \hVV$ for $A = \attrs$, one query $A \to \bot$ is sufficient. Of course, a positive answer to such a query means that $A \not\in \hVV$ for any subset $A$ of $\attrs$. This reasoning leads to Algorithm \ref{algo:member}.

\begin{algorithm}
	\caption{\algname{IsMember}($A$, $is\_valid(\cdot)$)}
	\label{algo:member}
	\begin{algorithmic}[1]
		\REQUIRE A set $A \subseteq \attrs$ and an implication oracle $is\_valid(\cdot)$ for some $\VV \subseteq 2^\attrs$.
		\ENSURE $\TRUE$ if $A \in \hVV$ and $\FALSE$ otherwise.
		\IF{$is\_valid(A \to \bot)$}
			\RETURN \FALSE
		\ENDIF
		\FORALL{$a \in \attrs \setminus A$}
			\IF{$is\_valid(A \to \{a\})$}
				\RETURN \FALSE
			\ENDIF
		\ENDFOR
		\RETURN \TRUE
	\end{algorithmic}
\end{algorithm}

Note that, in this simulation, we do not use counterexamples provided by the implication oracle. We will call implication queries that do not return counterexamples \emph{restricted}. Thus, a membership query relative to $\hVV$ can be simulated by a linear (in $|\attrs|$) number of restricted implication queries relative to $\VV$. Since essentially all the algorithm does is posing queries and every next query can be obtained from the previous one in constant time, it is straightforward that the time complexity of Algorithm \ref{algo:member} is $O(|\attrs|)$ (of course, not including the time the oracle might need to answer the queries).

\subsection{Simulating Equivalence Queries}
\label{sec:approx}

We replace every equivalence query by sampling a number of variable assignments and checking whether any of them is a positive or negative counterexample. This technique, proposed in \cite{angluin1988queries}, makes it possible to obtain a polynomial-time PAC algorithm from a polynomial-time exact learning algorithm that uses equivalence queries. A similar strategy is used in \citep{kautz1995horn} to obtain a PAC algorithm computing an $\varepsilon$-Horn approximation of an explicitly given set of models. In our case, the difference is that we use this technique to transform an exact algorithm for learning a Horn theory with the membership oracle w.r.t. this theory into an algorithm for learning the Horn envelope of an arbitrary theory with the implication oracle w.r.t. this arbitrary theory.

In our algorithm, we sample $\Big\lceil\frac{1}{\varepsilon} \cdot \big(i + \ln\frac{1}{\delta}\big)\Big\rceil$ variable assignments to simulate the $i$th equivalence query asked by the algorithm. For each generated assignment $X$, we check if $X$ satisfies our hypothesis $\calH$ and, using Algorithm \ref{algo:member}, if $X \in \hVV$. If the answers to these questions are different, then $X$ is a counterexample to $\calH$. If none of the generated assignments is a counterexample, the algorithm concludes that $\calH$ is an $\varepsilon$-approximation of $\VV$.
We present the sampling procedure in Algorithm \ref{algo:approx-equiv} and the procedure that computes an $\varepsilon$-Horn approximation in Algorithm \ref{algo:pac-ae}.

\begin{algorithm}
	\caption{\algname{IsApproximatelyEquivalent}($\calH$, $is\_valid(\cdot)$, $\varepsilon$, $\delta$, $i$)}
	\label{algo:approx-equiv}
	\begin{algorithmic}[1]
		\REQUIRE A Horn formula $\calH$ over a set of propositional variables $\attrs$, an implication oracle $is\_valid(\cdot)$ for some $\VV \subseteq 2^\attrs$, $0 < \varepsilon \leq 1$, $0 < \delta \leq 1$, and $i \in \mathbb{N}$.
		\ENSURE A counterexample to $\calH$ relative to $\hat{\VV}$ if found; \TRUE, otherwise.
		\FOR{$j := 1$ \TO $\Big\lceil\frac{1}{\varepsilon} \cdot \big(i + \ln\frac{1}{\delta}\big)\Big\rceil$}
			\STATE generate $X \subseteq M$ uniformly at random
			\IF{$(X \models \calH) \not\equiv \algname{IsMember}(X, is\_valid(\cdot))$}
				\RETURN $X$
			\ENDIF
		\ENDFOR
		\RETURN \TRUE
	\end{algorithmic}
\end{algorithm}

\begin{algorithm}
	\caption{\algname{HornApproximation}($is\_valid(\cdot)$, $\varepsilon$, $\delta$)}
	\label{algo:pac-ae}
	\begin{algorithmic}[1]
		\REQUIRE An implication oracle $is\_valid(\cdot)$ for some $\VV \subseteq 2^\attrs$, $0 < \varepsilon \leq 1$, and $0 < \delta \leq 1$.
		\ENSURE A set of implications $\calH$ that, with probability at least $1 - \delta$, is an $\varepsilon$-Horn approximation of $\VV$.
		\STATE $\calH := \varnothing$
		\STATE $i := 1$
		\WHILE{\algname{IsApproximatelyEquivalent}($\calH$, $is\_valid(\cdot)$, $\varepsilon$, $\delta$, $i$) returns counterexample $X$}
			\IF[negative counterexample]{$X \models \calH$}
				\STATE $found := \FALSE$
				\FORALL{$A \to B \in \calH$}
					\STATE $C := A \cap X$
					\IF{$A \neq C$ \AND \NOT $\algname{IsMember}(C)$}
						\STATE $\calH := \calH \setminus \{A \to B\}$
						\STATE $\calH := \calH \cup \{C \to B\}$\label{line:modified-impl}
						\STATE $found := \TRUE$
						\STATE \textbf{exit for}
					\ENDIF
				\ENDFOR
				\IF{\NOT $found$}
					\STATE $\calH := \calH \cup \{X \to \bot\}$\label{line:new-impl}
				\ENDIF
			\ELSE[positive counterexample]
				\FORALL{$A \to B \in \calH$ such that $X \not\models A \to B$}
					\STATE $\calH := \calH \setminus \{A \to B\}$
					\STATE $\calH := \calH \cup \{A \to B \cap X\}$\COMMENT{If $B = \bot$, assume that $B = \attrs \cup \{\bot\}$}
				\ENDFOR
			\ENDIF
			\STATE $i := i + 1$
		\ENDWHILE
	\end{algorithmic}
\end{algorithm}

\begin{theorem}
Let $\VV \subseteq 2^\attrs$ be an arbitrary set of variable assignments and $\hH$ be its Horn envelope. Given a (restricted) implication oracle for $\VV$, $0 < \varepsilon \leq 1$, and $0 < \delta \leq 1$ as input, Algorithm \ref{algo:pac-ae} computes an implication set $\calH$ that, with probability at least $1 - \delta$, is an $\varepsilon$-Horn approximation of $\VV$.
This algorithm runs in time polynomial in $|\attrs|$, $|\hH|$, $1/\varepsilon$, and $1/\delta$.
\end{theorem}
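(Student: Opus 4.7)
The plan is to analyze Algorithm~\ref{algo:pac-ae} in three stages: correctness of the simulated membership oracle, the probabilistic guarantee of the sampling-based equivalence check, and a bound on the number of outer iterations.

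First, by Theorem~\ref{thm:member}, Algorithm~\ref{algo:member} correctly decides, using at most $|\attrs|$ restricted implication queries to $\VV$, whether a given $A \subseteq \attrs$ lies in $\hVV$. Consequently, every call to \algname{IsMember} inside Algorithm~\ref{algo:pac-ae} faithfully simulates a membership query relative to $\hH$. Under this simulation, the body of the main loop of Algorithm~\ref{algo:pac-ae} is identical to the body of \algname{Horn1} applied to $\calH_* = \hH$. Therefore, by the analyses of~\cite{angluin1992learning} and~\cite{arias2011construction}, if \algname{IsApproximatelyEquivalent} were replaced by an exact equivalence oracle relative to $\hH$, the algorithm would terminate after at most $O(|\attrs|\cdot|\hH|)$ iterations and output the Duquenne--Guigues basis of $\hH$. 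Crucially, that bound on the number of iterations does not depend on which particular counterexamples are returned, so it remains valid when counterexamples are drawn by sampling.

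Second, I would analyze the probabilistic guarantee. Fix an iteration $i$ of the outer loop, and let $\calH_i$ denote the hypothesis at its beginning. If $\calH_i$ is not an $\varepsilon$-Horn approximation of $\VV$, i.e., the density of $\Mod\calH_i \symdiff \Mod\hH$ in $2^\attrs$ exceeds $\varepsilon$, then a uniformly random $X \subseteq \attrs$ is a counterexample (and is detected as such since \algname{IsMember} correctly decides $X \in \Mod\hH$) with probability strictly greater than $\varepsilon$. The probability that none of the $\lceil \tfrac{1}{\varepsilon}(i + \ln\tfrac{1}{\delta})\rceil$ independent samples yields a counterexample is therefore at most
\[
(1-\varepsilon)^{\lceil \tfrac{1}{\varepsilon}(i+\ln\tfrac{1}{\delta})\rceil}
\le e^{-(i+\ln(1/\delta))} = \delta\, e^{-i}.
\]
A union bound over all iterations gives that the probability that Algorithm~\ref{algo:pac-ae} ever terminates with a hypothesis that is not an $\varepsilon$-Horn approximation is at most $\sum_{i\ge 1} \delta\, e^{-i} = \delta/(e-1) < \delta$.

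Third, for the runtime bound, the algorithm performs $O(|\attrs|\cdot|\hH|)$ outer iterations; each consists of one call to \algname{IsApproximatelyEquivalent}, which performs $O\!\left(\tfrac{1}{\varepsilon}(i + \ln\tfrac{1}{\delta})\right)$ samples and as many calls to \algname{IsMember} of cost $O(|\attrs|)$ queries each, plus $O(|\calH|)$ further \algname{IsMember} calls to process a returned counterexample. All costs are polynomial in $|\attrs|$, $|\hH|$, $1/\varepsilon$, and $\log(1/\delta)$, hence in particular in $1/\delta$. The main obstacle is ensuring the convergence argument for \algname{Horn1} still applies when the equivalence oracle is approximate and may return arbitrary (non-adversarial) counterexamples; this is handled by noting that the analysis of~\cite{arias2011construction} depends only on the structural fact that each iteration either strictly refines some implication already in $\calH$ or adds a new one derivable from a minimal non-model, and not on any specific choice of counterexample. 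Together with the telescoping failure probability, this yields the claimed PAC guarantee.
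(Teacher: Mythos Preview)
Your proof is correct and follows essentially the same strategy as the paper's: simulate membership via Theorem~\ref{thm:member}, invoke the counterexample-independent iteration bound from~\cite{angluin1992learning}, and bound the total failure probability by a geometric series over the sampling rounds. The only cosmetic difference is that the paper aims for per-round failure probability $\delta/2^i$ and then observes $\ln(2^i/\delta) \le i + \ln(1/\delta)$, whereas you compute the per-round failure directly as $\delta e^{-i}$ from the sample size $\lceil \tfrac{1}{\varepsilon}(i+\ln\tfrac1\delta)\rceil$; both telescope below $\delta$.
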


\begin{proof}
As shown in \citep{angluin1992learning}, Algorithm \ref{algo:afp} requires a number of counterexamples polynomial in $|\attrs|$ and $|\hH|$ no matter what counterexamples it receives. Suppose that this number is at most $k$. Since the only difference between Algorithm \ref{algo:afp} and Algorithm \ref{algo:pac-ae} is how queries get answered, the upper bound $k$ on the number of counterexamples will work for Algorithm \ref{algo:pac-ae}, too. We will make sure that the probability $\delta_i$ of failing to find a counterexample for the $i$th equivalence query using Algorithm \ref{algo:approx-equiv} is at most $\delta/2^i$. Then the probability of failing to find a counterexample for any of at most $k$ equivalence queries is bounded above by
\[\frac{\delta}{2} + \Big(1 - \frac{\delta}{2}\Big)\Bigg(\frac{\delta}{4} + \Big(1 - \frac{\delta}{4}\Big)\Bigg(\frac{\delta}{8} + \Big(1 - \frac{\delta}{8}\Big)\Bigg(\dots\Bigg(\frac{\delta}{2^{k-1}} + \Big(1 - \frac{\delta}{2^{k-1}}\Big)\frac{\delta}{2^k}\Bigg)\dots\Bigg)\Bigg)\Bigg)\leq\]
\[\leq \frac{\delta}{2} + \frac{\delta}{4} + \frac{\delta}{8} + \dots + \frac{\delta}{2^k} < \delta.\]

Let us assume that, at some point of the algorithm,
\[\frac{|\Mod\calH \bigtriangleup \Mod\hH|}{2^{|\attrs|}} > \varepsilon.\]
If this is not the case, then $\calH$ is already an $\varepsilon$-approximation of $\VV$, and it is safe to terminate the algorithm.
Under this assumption, if we choose $X$ randomly, we have $X \in \Mod\calH \bigtriangleup \Mod\hH$ with probability of at least $\varepsilon$. We check if this is the case with Algorithm \ref{algo:member}. If $X \in \Mod\calH \bigtriangleup \Mod\hH$, we use it as a counterexample to the equivalence query and proceed as in Algorithm \ref{algo:afp}. Otherwise, we generate another $X$. We make at most $l$ attempts at generating $X$; if we do not obtain a counterexample, we output $\calH$ and terminate.

The probability that we fail to find a counterexample in $l$ trials is smaller than $\delta_i$ if
\begin{equation}
	l > \frac{1}{\varepsilon} \cdot \ln\frac{1}{\delta_i}.\label{eq:trials}
\end{equation}
Indeed, the probability of failure is less than $(1 - \varepsilon)^l$. For this to be less than $\delta_i$, we need
\[l > \log_{1-\varepsilon}\delta_i = \frac{\ln\delta_i}{\ln(1 - \varepsilon)} = \frac{\ln(1/\delta_i)}{-\ln(1 - \varepsilon)}.\]
Since $-\ln(1 - \varepsilon) > \varepsilon$, it suffices to choose any $l$ satisfying (\ref{eq:trials}) to make the probability of failure less than $\delta_i$. In particular, we can set
\[l = \Bigg\lceil\frac{1}{\varepsilon} \cdot \ln\frac{1}{\delta_i}\Bigg\rceil = \Bigg\lceil\frac{1}{\varepsilon} \cdot \ln\frac{2^i}{\delta}\Bigg\rceil \leq \Bigg\lceil\frac{1}{\varepsilon} \cdot \Big(i + \ln\frac{1}{\delta}\Big)\Bigg\rceil \leq \Bigg\lceil\frac{1}{\varepsilon} \cdot \Big(poly(|\attrs|, |\hH|) + \ln\frac{1}{\delta}\Big)\Bigg\rceil.\]

To sum up, Algorithm \ref{algo:afp} runs in time polynomial in $|\attrs|$ and the number of implications in the target Horn sentence $\hH$. We simulate this algorithm, but replace each equivalence query by a number of attempts polynomial in $|\attrs|$, $|\hH|$, $1/\varepsilon$, and $1/\delta$ at generating a counterexample to the current hypothesis $\calH$. Each such attempt requires time $poly(\attrs, |\hH|)$, in particular, since the algorithm guarantees that $|\calH| \leq |\hH|$. Therefore, our simulation runs in time polynomial in $|\attrs|$, $|\hH|$, $1/\varepsilon$, and $1/\delta$ and, as argued above, produces an $\varepsilon$-approximation of $\VV$ with probability at least $1 - \delta$.
\end{proof}
We are well aware that this result, in another form, is known from literature
\citep{angluin1992learning}. However, we included this result on the one hand to
show that it also holds with the new form of implication oracle, and on the
other hand to include all details in order to present an comprehensive
exposition.


\subsection{Strong Approximations}

The algorithm we have just presented can be modified to compute $\varepsilon$-strong Horn approximations. We only need to modify the way counterexamples are generated by the \algname{IsApproximatelyEquivalent} procedure.

If
\begin{equation}\label{eq:strong-approx}
	\frac{|\{V \subseteq \attrs \mid \calH(V) \neq \hat{\calH}(V)\}|}{2^{|\attrs|}} > \varepsilon,
\end{equation}
then, by generating $X$ uniformly at random, we obtain $X$ such that $\calH(X) \neq \hat{\calH}(X)$ with probability at least $\varepsilon$. Suppose that we have generated such an $X$. The problem is that this $X$ is not necessarily a counterexample in the sense required by the algorithm, because it may happen that it belongs neither to $\Mod\calH$ nor to $\hVV$. It turns out that we can use $X$ to manufacture a counterexample in time polynomial in $|\attrs|$.

\begin{theorem}
\label{thm:closure}
	Let $\hH$ be the Horn envelope of $\VV \subseteq 2^{\attrs}$ and $\calH$ be a Horn formula over $\attrs$. Then $\calH(X) = \hH(X)$ if and only if $\calH(X) \in \hVV \cup \{\bot\}$ and $\VV \models X \to \calH(X)$.
\end{theorem}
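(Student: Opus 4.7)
The plan is to unpack what $\hH(\cdot)$ looks like concretely and then verify the two implications by chasing inclusions. First I would observe that for any $X \subseteq \attrs$,
\[
\hH(X) \;=\; \bigcap\{V \in \VV \mid X \subseteq V\},
\]
with the convention that an empty intersection equals $\attrs \cup \{\bot\}$ (i.e.\ $\bot$). This holds because $\hVV$ is the intersection closure of $\VV$, and any $W \in \hVV$ with $X \subseteq W$ is itself an intersection of elements of $\VV$, each of which must contain $X$. Consequently, $\VV \models X \to \hH(X)$ is automatic, and $\hH(X)$ is the $\subseteq$-least element of $\hVV \cup \{\bot\}$ that contains $X$.

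For the forward direction, suppose $\calH(X) = \hH(X)$. Then $\calH(X) = \hH(X) \in \hVV \cup \{\bot\}$ by the observation above, which gives the first condition. Substituting $\calH(X)$ for $\hH(X)$ in the automatic entailment $\VV \models X \to \hH(X)$ yields the second condition $\VV \models X \to \calH(X)$. The $\bot$ case is handled uniformly: if $\hH(X) = \bot$, then no $V \in \VV$ contains $X$, so $\VV \models X \to \calH(X)$ holds vacuously.

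For the backward direction, assume $\calH(X) \in \hVV \cup \{\bot\}$ and $\VV \models X \to \calH(X)$, and prove two inclusions. For $\calH(X) \subseteq \hH(X)$: by the second hypothesis, every $V \in \VV$ with $X \subseteq V$ satisfies $\calH(X) \subseteq V$, so $\calH(X)$ is contained in the intersection $\hH(X)$; if the set of such $V$ is empty, then $\hH(X) = \bot$ and the inclusion is trivial under the paper's identification of $\bot$ with $\attrs \cup \{\bot\}$. For $\hH(X) \subseteq \calH(X)$: by extensivity of $\calH(\cdot)$ we have $X \subseteq \calH(X)$, and by the first hypothesis $\calH(X) \in \hVV \cup \{\bot\}$; since $\hH(X)$ is the least such set containing $X$, the inclusion follows. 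Combining the two yields $\calH(X) = \hH(X)$.

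The proof is mostly bookkeeping once the concrete formula for $\hH(X)$ is in hand; the only real subtlety, and the main thing I would be careful about, is making the $\bot$ convention consistent throughout, so that both the ``no model contains $X$'' case and the ``$\calH(X) = \bot$'' case slot cleanly into the same inclusion arguments rather than requiring a separate case analysis.
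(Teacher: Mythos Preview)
Your proof is correct and follows essentially the same route as the paper's: both directions are handled by recognising that $\hH(X)$ is the $\subseteq$-least element of $\hVV \cup \{\bot\}$ containing $X$, and the backward direction proceeds via the same two inclusions derived respectively from the two hypotheses. Your version is slightly more explicit in writing out $\hH(X)$ as an intersection of elements of $\VV$, whereas the paper leaves this implicit, but the argument is the same.
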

\begin{proof}
	Suppose that $\calH(X) = \hH(X) \neq \bot$. Then $\hH(X) \in \hVV$ and $\VV \models X \to \hH(X)$, and we also have $\calH(X) \in \hVV$ and $\VV \models X \to \calH(X)$. If, on the other hand, $\calH(X) = \hH(X) = \bot$, then $X$ is a subset of no model in $\VV$ and $\VV \models X \to \bot$.

	Conversely, if $\VV \models X \to \calH(X)$, then $\calH(X) \subseteq \hH(X)$; and, if $\calH(X) \in \hVV$, then $\hH(X)$, the minimal superset of $X$ from $\hVV$, must be a subset of $\calH(X)$, i.e., $\hH(X) \subseteq \calH(X)$. The latter must also hold if $\calH(X) = \bot$.
\end{proof}

To obtain a counterexample from a randomly generated $X$, we first compute $\calH(X)$ and query the oracle to verify the implication $X \to \calH(X)$. If the implication is invalid, the oracle will return a positive counterexample $C$. Otherwise, we check if $\calH(X) \in \hVV$ using the \algname{IsMember} procedure. If the outcome is negative, then $\calH(X)$ is a negative counterexample; else, from Theorem \ref{thm:closure}, we know that $\calH(X) = \hH(X)$, and we generate another $X$ unless we have reached the maximum number of iterations. Algorithm \ref{algo:strong-approx-equiv} gives the pseudocode.

Thus, given (\ref{eq:strong-approx}), the probability of finding a counterexample at one iteration of Algorithm \ref{algo:strong-approx-equiv} is greater than $\varepsilon$, and the same reasoning as in Section \ref{sec:approx} leads to the following theorem.

\begin{theorem}
Let $\VV \subseteq 2^\attrs$ be an arbitrary set of variable assignments and $\hH$ be its Horn envelope. Given an implication oracle for $\VV$, $0 < \varepsilon \leq 1$, and $0 < \delta \leq 1$ as input and using Algorithm \ref{algo:strong-approx-equiv} as the \algname{IsApproximatelyEquivalent} procedure, Algorithm \ref{algo:pac-ae} computes an implication set $\calH$ that, with probability at least $1 - \delta$, is an $\varepsilon$-strong Horn approximation of $\VV$.
This algorithm runs in time polynomial in $|\attrs|$, $|\hH|$, $1/\varepsilon$, and $1/\delta$.
\end{theorem}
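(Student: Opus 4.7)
The plan is to parallel the proof of the previous theorem, replacing only the analysis of the equivalence-query simulation so as to account for the stronger approximation criterion. As before, I would first invoke the bound from \citep{angluin1992learning}: Algorithm \ref{algo:afp} produces at most $k = \mathrm{poly}(|\attrs|, |\hH|)$ counterexamples regardless of how counterexamples are supplied, and this bound carries over verbatim to Algorithm \ref{algo:pac-ae}, since only the counterexample source differs. Setting $\delta_i = \delta/2^i$, a union-bound-style telescoping argument identical to the one in the previous proof shows that if every call to \algname{IsApproximatelyEquivalent} individually fails with probability at most $\delta_i$, then the overall failure probability is bounded by $\sum_{i=1}^{k}\delta/2^i < \delta$.

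The heart of the argument is to show that a single invocation of Algorithm \ref{algo:strong-approx-equiv} indeed fails with probability at most $\delta_i$ whenever $\calH$ is not yet an $\varepsilon$-strong Horn approximation, i.e., whenever (\ref{eq:strong-approx}) holds. Under (\ref{eq:strong-approx}), a uniformly random $X \subseteq \attrs$ satisfies $\calH(X) \neq \hH(X)$ with probability at least $\varepsilon$. By Theorem \ref{thm:closure}, such an $X$ must fall into one of two cases: either $\VV \not\models X \to \calH(X)$, or $\calH(X) \notin \hVV \cup \{\bot\}$. In the first case, the implication oracle returns a positive counterexample $C \in \VV \subseteq \hVV$ with $X \subseteq C$ and $\calH(X) \not\subseteq C$; since $X \models \calH$ would force $\calH(X) \subseteq C$, this $C$ witnesses $C \not\models \calH$ and is a valid positive counterexample for the Horn1 update. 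In the second case, $\calH(X)$ is closed under $\calH$ but fails \algname{IsMember}, so $\calH(X) \models \calH$ while $\calH(X) \notin \hVV$, giving a valid negative counterexample. Thus whenever a random $X$ satisfies $\calH(X) \neq \hH(X)$, Algorithm \ref{algo:strong-approx-equiv} extracts a counterexample of the kind needed by Algorithm \ref{algo:pac-ae}.

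Consequently, the probability that $l$ independent samples all fail to yield a counterexample is at most $(1-\varepsilon)^l$, and the same inequality $l \geq \lceil \tfrac{1}{\varepsilon}(i + \ln\tfrac{1}{\delta})\rceil$ used in Section \ref{sec:approx} ensures $(1-\varepsilon)^l \leq \delta_i$. Combining this with the telescoping bound gives the desired $1-\delta$ overall success probability. For the running time, each iteration of the sample loop executes \algname{IsMember} (with cost $O(|\attrs|)$ implication queries) plus one implication query for $X \to \calH(X)$, and $|\calH| \leq |\hH|$ is maintained throughout; the number of outer iterations is bounded by $k$, and the number of samples per iteration is bounded by $\lceil \tfrac{1}{\varepsilon}(k + \ln\tfrac{1}{\delta})\rceil$, so the total time is polynomial in $|\attrs|$, $|\hH|$, $1/\varepsilon$, and $1/\delta$.

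The only genuinely new step compared with the earlier theorem is the reduction from a random $X$ with $\calH(X) \neq \hH(X)$ to a properly typed counterexample, and this reduction is essentially handed to us by Theorem \ref{thm:closure}; the remaining work is bookkeeping shared with the weaker-approximation case. I therefore expect the main obstacle to be purely expository, namely articulating carefully why the $C$ returned by the oracle, or the set $\calH(X)$ itself, behaves as a counterexample for the Horn1 update rules in the precise sense required for the bound of \citep{angluin1992learning} to remain applicable.
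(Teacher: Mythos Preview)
Your proposal is correct and mirrors the paper's approach exactly: the paper likewise reduces to the earlier theorem by observing that, under (\ref{eq:strong-approx}), each iteration of Algorithm~\ref{algo:strong-approx-equiv} produces a valid counterexample with probability at least~$\varepsilon$ via Theorem~\ref{thm:closure}, after which the analysis of Section~\ref{sec:approx} carries over verbatim. One small slip: in your first case you want ``$C \models \calH$'' rather than ``$X \models \calH$'' as the hypothesis that forces $\calH(X) \subseteq C$ (by monotonicity of the closure operator and $X \subseteq C$).
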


\begin{algorithm}
	\caption{\algname{IsStronglyApproximatelyEquivalent}($\calH$, $is\_valid(\cdot)$, $\varepsilon$, $\delta$, $i$)}
	\label{algo:strong-approx-equiv}
	\begin{algorithmic}[1]
		\REQUIRE A Horn formula $\calH$ over a set of propositional variables $\attrs$, an implication oracle $is\_valid(\cdot)$ for some $\VV \subseteq 2^\attrs$, $0 < \varepsilon \leq 1$, $0 < \delta \leq 1$, and $i \in \mathbb{N}$.
		\ENSURE A counterexample to $\calH$ with respect to $\hat{\VV}$ if found; \TRUE, otherwise.
		\FOR{$j := 1$ \TO $\Big\lceil\frac{1}{\varepsilon} \cdot \big(i + \ln\frac{1}{\delta}\big)\Big\rceil$}
			\STATE generate $X \subseteq M$ uniformly at random
			\STATE $Y := \calH(X)$
			\IF{$is\_valid(X \to Y)$ returns a counterexample $C$}
				\RETURN $C$\COMMENT{$C$ is a positive counterexample}
			\ENDIF
			\IF{\NOT \algname{IsMember($Y$, $is\_valid(\cdot)$)}}
				\RETURN $Y$\COMMENT{$Y$ is a negative counterexample}
			\ENDIF
		\ENDFOR
		\RETURN \TRUE
	\end{algorithmic}
\end{algorithm}


Summing this subsection up, strong approximation copes with the problem of having
generated a set that may not be a counterexample. Using this to obtain a valid
counterexample in an efficient way, and, in our opinion, a novelty.

\subsection{Variations and Optimizations}

The algorithm can be modified so that its current hypothesis $\calH$ is always such that $\VV \models \calH$. To ensure this, we need to take some care when adding implications in lines \ref{line:modified-impl} and \ref{line:new-impl} of Algorithm \ref{algo:pac-ae}. For example, instead of adding implication $X \to \bot$, we should check via an implication query whether it is valid, and, if not, add instead implication $X \to \hH(X)$ by computing $\hH(X)$, again, using implication queries. One way to do this is to query about the validity of implications of the form $X \to \{a\}$ for $a \in \attrs \setminus X$: those $a$ for which the answer is positive belong to $\hH(X)$.
With this modification, our sampling procedure that replaces the equivalence oracle will return only negative counterexamples, and thus the part of Algorithm \ref{algo:pac-ae} dealing with positive counterexamples can be eliminated.

To reduce the number of queries, we can cache counterexamples returned by the oracle. All these counterexamples are models from $\VV$, and thus they can be used to falsify some implications without resorting to the oracle: if an implication $A \to B$ has a counterexample among the models obtained so far, a query about its validity is not necessary. Since the total number of queries submitted by the algorithm is polynomial in all the quantities we care about, so is the number of counterexamples received from the oracle, and, consequently, the memory and time overhead incurred by this modification is also polynomial.

Similarly, we can cache the implications confirmed by the oracle and use them to verify the validity of some implications. It is also worth exploring whether integrating such confirmed implications into the current hypothesis may be useful.

\section{Experimental Evaluation}
\label{sec:evaluation}

Our algorithms come with a theoretical guarantee on the quality of approximation or, to be more precise, on the probability of attaining the desired quality. In Section \ref{sec:measures}, we suggest quality measures \emph{precision} and \emph{recall}, which are slightly different from those of (\ref{eq:approx}) and (\ref{eq:s-approx}) for which the algorithms were designed. In Section~\ref{sec:data-sets-results}, we experimentally evaluate the quality of approximations computed by Algorithm \ref{algo:pac-ae} in terms of these measures.

In general, the domain expert, or the oracle, used in learning is not necessarily a human being: it may well be a knowledge base equipped with a procedure capable of answering implication queries.
To easily obtain domain experts for
our experiments, we make use of the following approach. Starting from a data
set $\mathfrak{V}$, we simulate a domain expert for $\mathfrak{V}$ by confirming
$A \to B$ if $\mathfrak{V} \models A \to B$. Otherwise, the expert returns a
counterexample to $A \to B$ from the dataset. The datasets we use are described in Section~\ref{sec:data-sets}.

\subsection{Precision and Recall}\label{sec:measures}
Informally, precision measures how often the extracted implications infer only correct knowledge from a given variable assignment. Conversely, recall measures how often the knowledge inferred from a variable assignment is complete.

More formally, let $\attrs$ be a finite set, let $\mathfrak{V}$ be a set of variable assignments over $\attrs$, $\hH$ be its Horn envelope, and $\mathcal{H}$ be a set of implications.  Then the \emph{precision} and \emph{recall} of $\mathcal{H}$ with respect to $\mathfrak{V}$ are defined by
  \begin{align*}
    \precision_{\mathfrak{V}}(\mathcal{H}) &\coloneqq \frac{\lvert\{A\subseteq\attrs\mid \mathfrak{V}\models A\to \mathcal{H}(A)\}\rvert}{2^{\lvert\attrs\rvert}},\\
    \recall_{\mathfrak{V}}(\mathcal{H}) &\coloneqq \frac{\lvert\{A\subseteq\attrs\mid \mathcal{H}\models A\to \hH(A)\}\rvert}{2^{\lvert\attrs\rvert}}.
  \end{align*}

One can see that precision and recall are, in a way, two sides of strong approximation as defined by (\ref{eq:s-approx}).

Computing the exact values of precision and recall for sufficiently large sets
$\attrs$ is infeasible and, for our experimental evaluation, is not necessary: a good approximation of the values would be enough.  To obtain such approximations, we sample a certain number of subsets $A \subseteq \attrs$ and count how often the corresponding condition is true.  More precisely, to obtain a good approximation of $\precision_{\mathfrak{V}}(\mathcal{H})$ and $\recall_{\mathfrak{V}}(\mathcal{H})$, we randomly choose a subset $\mathcal{T} \subseteq 2^\attrs$ and compute
\begin{align*}
  \precision^{\approx}_{\mathfrak{V}}(\mathcal{H})
  &\coloneqq \frac{\lvert\{A\in\mathcal{T}\mid \mathfrak{V} \models A\to \mathcal{H}(A)\}\rvert}{\lvert\mathcal{T}\rvert},\\
  \recall^{\approx}_{\mathfrak{V}}(\mathcal{H})
  &\coloneqq \frac{\lvert\{A\in\mathcal{T}\mid \mathcal{H}\models A\to \hH(A)\}\rvert}{\lvert\mathcal{T}\rvert},
\end{align*}

An immediate question is what size $n$ the sample set $\mathcal{T}$ needs to have
for the approximation to be a good one.  Utilizing Hoeffding's
inequality~\citep{hoeffding1963}, we obtain for fixed $0 < \eta, t$ that
\begin{align*}
  \Pr(\precision_{\mathfrak{V}}(\mathcal{H}) - \precision^{\approx}_{\mathfrak{V}}(\mathcal{H}) \geq t)
  &< \eta,\\
  \Pr(\recall_{\mathfrak{V}}(\mathcal{H}) - \recall^{\approx}_{\mathfrak{V}}(\mathcal{H}) \geq t)
  &< \eta
\end{align*}
for
\begin{equation*}
  n \geq \frac{1}{2t^{2}}\cdot\ln\frac{1}{\eta}.
\end{equation*}
For our experiments, we chose $\eta = 0.001$ and $t = 0.01$, resulting in $n \approx 35000$ samples.

\subsection{Datasets}
\label{sec:data-sets}

We utilized various datasets with various properties. All used
datasets were obtained from the UCI Machine Learning
Repository~\citep{Lichman:2013}. The particular choice for the Zoo
dataset and the Breast Cancer dataset was made due to
the fact that those datasets are almost Boolean, well investigated, and of
moderate size, thus suiting our experiments. For comparison reasons,
we also considered randomly generated datasets that were of the same
size and density as the ones we use from the UCI Machine Learning
Repository.

\paragraph{Zoo Dataset (ZD)}
\label{sec:zoo-data-set}

This dataset, created by Richard Forsyth, consists of 101 animals described by
15 attributes. From these attributes, 14 are Boolean and have been used as they
are. Examples include attributes \emph{(has) feathers}, \emph{(is) airborne}, and \emph{(has a)
  backbone}. The two remaining attributes \emph{(number of) legs} and
\emph{type} were replaced by \emph{legs = 0}, \emph{legs = 2}, \emph{legs = 4},
\emph{legs = 5}, \emph{legs = 6}, \emph{legs = 8}, \emph{type = 1}, \dots,
\emph{type = 6}.  The models of this dataset are then the combinations of
attributes occurring in it.

\paragraph{Breast Cancer Dataset (BC)}
\label{sec:breast-cancer-data}

This dataset was originally obtained from the University of Wisconsin
Hospitals, Madison from Dr. William H. Wolberg~\citep{wdbc}. It
consists of 699 named instances, each representing a clinical case described by
nine numeric attributes such as \emph{Uniformity of Cell Size}, \emph{Bare
  Nuclei}, and \emph{Marginal Adhesion}.  Each of these attributes can have a
value between one and ten, and these attributes were turned into Boolean
attributes in the same way as for the ZD dataset.  Finally, one attribute classifies a clinical
case as malignant or benign.  The models of this dataset are again the
combinations of attributes occurring in it.

\paragraph{Random Dataset (RD)}

For both the Zoo dataset and the Breast Cancer dataset, we generated ten random
datasets, all with the same number of attributes, instances, as well as
incidence probability.  These datasets have been obtained by randomly choosing
whether an instance possesses an attribute, with the same probability as for the
original datasets.  Note that while the process places incidences uniformly at
random, the Horn envelopes of the resulting set of models do not have to be
distributed uniformly, as discussed in~\cite{conf/cla/BorchmannH16}.

\subsection{Experimental Results}
\label{sec:data-sets-results}

For the various datasets described above, we conducted two types of experiments.
Firstly, we ran Algorithm~\ref{algo:pac-ae} for
various choices of $\varepsilon$ and $\delta$ and computed the precision, recall,
fraction of valid implications, as well as the number of computed implications.
The purpose of these experiments is to investigate the quality of the approximation
returned by the algorithm.  Secondly, we repeated the
algorithm a certain number of times and investigated the
distribution of precision, recall, fraction of correct implications, as well as
the number of implications.  The purpose here is to see how much the results can vary between runs of the algorithm.


\paragraph{Single Runs of \algname{HornApproximation}}

We begin our discussion with the results for the Zoo dataset. We ran
Algorithm \ref{algo:pac-ae} varying $\varepsilon$ from $\{0.01,0.1,0.5\}$ and
$\delta$ from $\{0.1,0.9\}$, three times each. We chose those particular values for
$\varepsilon$ and $\delta$ such that our estimates of precision and recall differ from the
true values by at most 1\%, 10\%, and 50\% with high as well as low
probability. Increasing $\varepsilon$ further seems unreasonable for
real world applications.

We observed different outcomes for different parameter combinations, as shown in Table~\ref{tab:res}. Among the
computed implications were several combining different attributes, e.g.
\begin{equation*}
  \{\emph{airborne}, \emph{breathes}, \emph{venomous}\} \to \{\emph{eggs},
  \emph{type=6}, \emph{leg=6}, \emph{hair}\}.
\end{equation*}
A complete list of implications for one run is shown in
the end of this section.   The precision was always 1, and was
therefore not included in Table~\ref{tab:res}.  The recall is very volatile in
our experiments.
Varying the $\varepsilon$ parameter has a big impact on the size of the
resulting set of implications: the smaller $\varepsilon$, the more
implications are found.  The increase of the number of learned implications when
$\varepsilon$ is decreased is expected, since with more samples more queries to
the oracles can be stated.  Indeed, choosing $\varepsilon= 1/100,1/1000,1/10000$
resulted in bases of sizes 24, 38, and 95, respectively.  Note that the
Duquenne-Guigues basis of ZD has 141 implications.  On the other hand, more
queries do not necessarily lead to more implications, as shown by the results in
Table~\ref{tab:res}.
We also
counted the number of queries to the expert, which were 59852, 1016796, and
53455186, for the three values of $\varepsilon$ respectively.


The BC dataset 
has six times
as many attributes as the Zoo dataset. Its Duquenne--Guigues basis
consists of 10739 implications. Compared with the Zoo dataset, an inferior
recall for higher values of $\varepsilon$ can be observed.  However, the
precision, as well as the fraction of correctly computed implications, do not
seem to be correlated with $\varepsilon$.

\begin{table}[t]
  \centering
  \begin{tabular}{l|ccc|ccc|ccc}
    \toprule
    Name&SR$_{1}$&DP$_1$&BS$_{1}$&SR$_{2}$&DP$_2$&BS$_{1}$&SR$_{3}$&DP$_3$&BS$_{3}$\\
    \midrule
    ZD$_{(0.01,0.1)}$&\textbf{0.91}&0.75&24&0.89&0.87&23&0.88&\textbf{0.96}&\textbf{26}\\
    ZD$_{(0.01,0.9)}$&0.08&0.71&24&\textbf{0.90}&\textbf{0.92}&\textbf{28}&0.81&0.74&26\\
    ZD$_{(0.1,0.1)}$&0.09&\textbf{1.00}&\textbf{17}&\textbf{0.24}&0.79&14&0.00&0.75&14\\
    ZD$_{(0.1,0.9)}$&0.19&0.73&11&\textbf{0.75}&0.73&11&0.49&\textbf{0.73}&\textbf{15}\\
    ZD$_{(0.5,0.1)}$&0.07&1.00&10&\textbf{0.37}&1.00&11&0.00&\textbf{1.00}&\textbf{11}\\
    ZD$_{(0.5,0.9)}$&0.73&0.89&9&0.54&0.78&9&\textbf{0.73}&\textbf{1.00}&\textbf{11}\\
    \midrule
    BC$_{(0.01,0.1)}$&1.00&0.95&39&0.99&\textbf{0.97}&38&\textbf{1.00}&0.96&\textbf{50}\\
    BC$_{(0.01,0.9)}$&\textbf{1.00}&0.95&41&1.00&0.94&\textbf{47}&1.00&\textbf{0.98}&44\\
    BC$_{(0.1,0.1)}$&\textbf{0.99}&\textbf{0.97}&\textbf{31}&0.93&0.96&26&0.98&0.93&29\\
    BC$_{(0.1,0.9)}$&0.88&0.94&33&0.97&0.90&29&\textbf{0.99}&\textbf{0.97}&\textbf{35}\\
    BC$_{(0.5,0.1)}$&0.84&1.00&22&\textbf{0.88}&\textbf{1.00}&\textbf{24}&0.67&1.00&21\\
    BC$_{(0.5,0.9)}$&0.75&\textbf{1.00}&25&\textbf{0.91}&1.00&24&0.79&0.93&\textbf{28}\\
  \end{tabular}
  \caption{Results for the Zoo (ZD) and Breast Cancer (BC) experiments for all
    parameter combinations and all three runs. SR = the recall, DP = the fraction of
    valid implications, BS = the number of computed implications.}
  \label{tab:res}
\end{table}

Finally, for each random dataset we applied our algorithm and calculated the
average value and the standard deviation of the size of the set of
implications, the fraction of correctly computed implications, and the recall.
We used $\varepsilon=0.1$ and $\delta=0.1$.  For
the Zoo dataset, we obtained around $23.1\pm3.8$ implications, with a fraction
of $0.84\pm0.12$ valid ones, and recall around $0.90\pm0.05$.  For the
Breast Cancer dataset, we obtained $24\pm1.3$ implications, $0.94\pm0.04$ of
which were valid, and a recall of $0.97\pm0.01$.

The size of the set of implications dropped for the Breast Cancer dataset
significantly, from about 30 to approximately 24.  On the contrary, we see an
increase from around 15 to 24 in the Zoo dataset. For both datasets, we can
observe that the fraction of valid implications is about the same
in the random dataset and the Zoo and Breast Cancer datasets, respectively.
However, the recall in the Breast Cancer case stays the same, whereas in the Zoo
case the recall for the random dataset is considerable larger than for the
original dataset. The standard deviation for both measures is considerably
small.  We conjecture that the drop in the number of implications obtained for the
Breast Cancer dataset might be attributed to the random generation process:
while generating the random datasets, we did not take into account that multiple
values of a numeric attribute should still exclude each other. Since Breast
Cancer dataset contains many numeric attributes, this effect could be large.

\paragraph{Repeated Runs of \algname{HornApproximation}}
\label{sec:stability}

How reliable is the computation for a particular set of parameters? Since the
results in the previous section revealed a high volatility, especially for the
recall measure, we wanted to check how reliable the results of the algorithm were in terms of reproducibility. For this, we applied
the algorithm 1000 times to the Zoo dataset using
$\varepsilon\in\{0.01,0.1,0.5,0.9\}$ with $\delta=0.1$. The results are shown in
Figure~\ref{fig:stab}.

\begin{figure}[t]
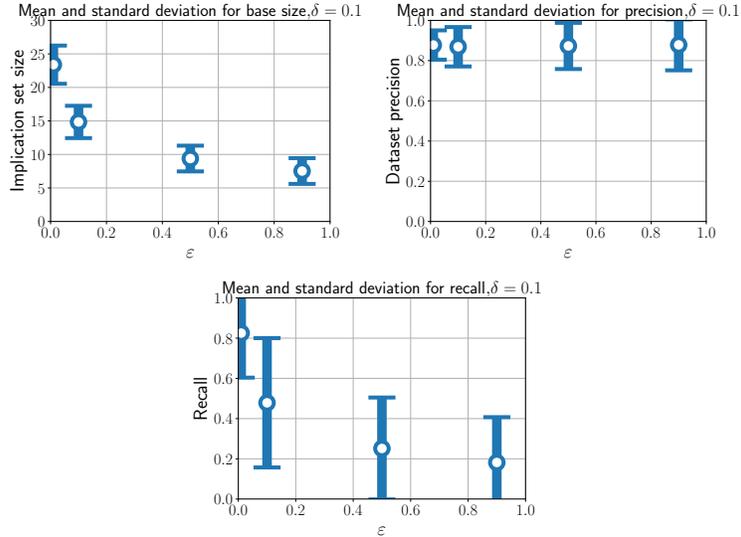

  \centering
  \includegraphics[scale=0.3]{basesize_plot_zoo_1000runs.pdf}
  \includegraphics[scale=0.3]{precision_plot_zoo_1000runs.pdf}
  \includegraphics[scale=0.3]{recall_plot_zoo_1000runs.pdf}
  \caption{Stability experiment for ZD.  Results with
    $\varepsilon\in\{0.01,0.1,0.5,0.9\}$ for size of implication set (left),
    dataset precision (middle), and recall (right).}
  \label{fig:stab}
\end{figure}

For the mean of the number of implications, as well as for the mean of
the recall, we observe an inverse proportionality for increasing
$\varepsilon$. For the recall, the standard deviation is high in general
and increasing with $\varepsilon$. In contrast, the fraction
of valid implications remains stable for all considered
$\varepsilon$ with only a small increase in the standard deviation.

All plots indicate that the implications obtained by the algorithm
are reliable to a certain degree with respect to multiple runs of the
algorithm. The observed inverse proportionality can be explained by the number
of samples drawn for a fixed $\varepsilon$ being inverse proportional, cf.\@
Algorithm~\ref{algo:approx-equiv}. The high standard deviation for the recall
may be due to the fact that, for larger values of $\varepsilon$, it is more
likely that frequently applicable implications are missing.  However, for
$\varepsilon=0.1$, we obtained on average a recall of 80\%, which is comparably
high.

\paragraph{Example Results Zoo Data}
\label{sec:example-results-zoo}

In Figure~\ref{example:zoo}, we show the set of implications obtained by applying the
PAC attribute exploration algorithm to the Zoo dataset using $\varepsilon=0.01$
and $\delta=0.1$.  Overall, there were 24 implications, 18 of which were valid
in the Zoo dataset. In this case, the approximate precision and recall were
1.00 and 0.92.

\begin{figure}[t]
  \centering
{\footnotesize
  \begin{itemize}
  \item $\{\emph{leg=5}\} \to \{\emph{eggs}, \emph{predator}, \emph{type=7},
\emph{aquatic}\}$
  \item $\{\emph{tail}, \emph{aquatic}\} \to \{\emph{backbone}\}$
  \item $\{\emph{hair}\} \to \{\emph{breathes}\}$
  \item $\{\emph{type=1}\} \to \{\emph{milk}, \emph{backbone},
\emph{breathes}\}$
  \item $\{\emph{type=3}\} \to \{\emph{backbone}, \emph{tail}\}$
  \item $\{\emph{airborne}\} \to \{\emph{breathes}\}$
  \item $\{\emph{type=2}\} \to \{\emph{eggs}, \emph{feathers}, \emph{catsize},
\emph{leg=2}, \emph{backbone}, \emph{tail}, \emph{breathes}\}$ [FALSE]
  \item $\{\emph{type=4}\} \to \{\emph{eggs}, \emph{toothed}, \emph{fins},
\emph{leg=0}, \emph{backbone}, \emph{tail}, \emph{aquatic}\}$
  \item $\{\emph{milk}\} \to \{\emph{type=1}, \emph{backbone},
\emph{breathes}\}$
  \item $\{\emph{leg=6}\} \to \{\emph{eggs}, \emph{type=6}, \emph{airborne},
\emph{breathes}, \emph{hair}, \emph{venomous}\}$ [FALSE]
  \item $\{\emph{domestic}, \emph{catsize}\} \to \{\emph{milk}, \emph{predator},
\emph{toothed}, \emph{type=1}, \emph{backbone}, \emph{breathes}, \emph{hair}\}$
[FALSE]
  \item $\{\emph{tail}, \emph{type=7}\} \to \{\emph{predator}, \emph{leg=8},
\emph{breathes}, \emph{venomous}\}$
  \item $\{\emph{leg=0}, \emph{breathes}, \emph{hair}\} \to \{\emph{milk},
\emph{predator}, \emph{toothed}, \emph{catsize}, \emph{fins}, \emph{type=1},
\emph{backbone}, \emph{aquatic}\}$
  \item $\{\emph{toothed}\} \to \{\emph{backbone}\}$
  \item $\{\emph{type=5}\} \to \{\emph{leg=4}, \emph{eggs}, \emph{toothed},
\emph{backbone}, \emph{breathes}, \emph{aquatic}\}$
  \item $\{\emph{eggs}, \emph{catsize}, \emph{backbone}\} \to \{\emph{tail},
\emph{breathes}\}$ [FALSE]
  \item $\{\emph{leg=2}\} \to \{\emph{backbone}, \emph{breathes}\}$
  \item $\{\emph{leg=8}\} \to \{\emph{predator}, \emph{tail}, \emph{type=7},
\emph{breathes}, \emph{venomous}\}$ [FALSE]
  \item $\{\emph{leg=4}, \emph{breathes}\} \to \{\emph{backbone}\}$
  \item $\{\emph{fins}, \emph{backbone}\} \to \{\emph{toothed},
\emph{aquatic}\}$
  \item $\{\emph{feathers}, \emph{breathes}\} \to \{\emph{type=2}, \emph{eggs},
\emph{catsize}, \emph{leg=2}, \emph{backbone}, \emph{tail}\}$ [FALSE]
  \item $\{\emph{type=6}, \emph{backbone}\} \to \bot$
  \item $\{\emph{leg=4}, \emph{leg=2}, \emph{backbone}, \emph{breathes}\} \to
\bot$
  \item $\{\emph{leg=0}, \emph{backbone}, \emph{breathes}\} \to
\{\emph{predator}, \emph{toothed}\}$
\end{itemize}}
\caption{The result of a particular run of the PAC attribute exploration with
  $\varepsilon=0.01$ and $\delta=0.1$. False implications are marked at the end
  by [FALSE].}
\label{example:zoo}
\end{figure}

\section{Conclusion}
\label{sec:conclusion}

In this paper, we have shown that Horn envelopes of arbitrary domains are
PAC-learnable via implication queries, for which the oracle must confirm that an
implication $A \to B$ is valid in the domain or provide a counterexample to
it. We have considered two notions of approximation of Horn envelopes, one much
stronger than the other one, and provided algorithms to compute both.

There are various possible next steps. One aspect is to optimize the algorithm
through more effective usage of implications that the oracle confirms and
counterexamples it provides. Another interesting modification of the algorithm
would be to change the sampling distribution in order to reduce the number of
queries or to better adapt to a domain while preserving the PAC property. Beyond
that, one may think about adapting the algorithm to learn implications that
admit a certain small fraction of counterexamples (i.e., high-confident
association rules). Other possible settings include learning from error-prone
experts or from multiple experts with partial or even conflicting views on the
domain.

An important potential application of the algorithms presented here is
completing description logic knowledge bases. This has been done with standard
attribute exploration \citep{BGSS07}; we plan to consider a similar application
for its PAC versions proposed in this paper.

Finally, one may think about a \emph{time-constraint exploration} version
suitable for situations when the system has only a limited amount of time to
learn implicational knowledge.


\bibliography{pacfca.bib}

\end{document}